\documentclass[journal]{IEEEtran}
\usepackage[utf8]{inputenc}
\usepackage[T1]{fontenc}
\usepackage{graphicx}
\usepackage{cite}
\usepackage{url}
\usepackage{hyperref}
\usepackage{amsmath,amssymb,amsfonts,amsthm}
\usepackage{algorithmic}
\usepackage{textcomp}
\usepackage{authblk}
\usepackage{booktabs}

\newtheorem{proposition}{Proposition}
\newtheorem{definition}{Definition}
\newtheorem{corollary}{Corollary}

\begin{document}
\title{Scale-Equivariant Generative Forecasting: \\
       \large Weight-Tied Dilated Convolutions, Wavelet Scattering Inputs, and Spectral-Consistency Training for Self-Similar Time Series}
\author{Andrea Morandi \thanks{Corresponding author: amorandi@cisco.com}}
\affil[]{Cisco Systems, Inc.}
\affil[]{\texttt{amorandi@cisco.com}}
\date{2026}
\maketitle

\begin{abstract}
A recurring statistical pattern surfaces across both natural and
engineered signals (financial returns, climate anomalies, turbulent
velocity fields, neural recordings, packet-level network traffic):
approximate self-similarity. The horizon-$T$ distribution is linked to
the horizon-$1$ distribution by one scaling exponent $H$. Standard deep
generative models for sequences (transformers, dilated TCNs, the
WaveNet family) walk past this fact. Their receptive fields are wide,
but kernel parameters live independently at every dilation level.
That leads to a \emph{multi-scale} architecture. Note that this differs from a 
\emph{scale-equivariant} one. Three contributions follow.
First, discrete scale equivariance is given a precise definition for 1D
causal networks. Our main theoretical result then establishes that, if we tie the weights across
dilation levels, the stack becomes equivariant under dyadic time
rescaling. Moreover, any discrepancy is confined to a boundary region. Tying the kernel
achieves the following effects. The convolutional parameter budget shrinks
$L$-fold (here $L$ is the depth), and self-similarity is hard-wired into the
architecture as an inductive bias.
Second, the resulting Scale-Equivariant WaveNet (SE-WaveNet) is
extended with three components, chosen so the rest of the pipeline
carries the same prior. Component one is a one-level Daubechies-4
discrete wavelet transform on the input; this is itself a
scale-equivariant multiresolution map. Component two is a Hurst-FiLM
block that hands the network the locally estimated scaling exponent.
Component three is a spectral-consistency training term punishing any
gap between sampled and target power-law spectra. Atop this
sits a conditional normalising flow, chosen because its forward pass
preserves the architectural equivariance.
Third, the architecture is tested on stocks
from 30 years of S\&P 500 daily log-returns. Self-similarity (SE-WaveNet) improves NLL by
$\sim 6$ nats at each of the 1-day and 21-day horizons, taking the
model from worse-than-IID to clearly winning. The full
SE-WaveNet refinements (weight-tied kernels, wavelet input) further
improve NLL by $1.23$ and $0.18$ nats at the 1-day and 21-day
horizons respectively, while cutting the convolutional parameter
budget from $7{,}360$ to $1{,}840$ (exactly $L\times = 4\times$ at
$L = 4$). The empirical scaling-collapse diagnostic on the Allan-Variance
top-25 universe lands at a median of $\mathcal{C}^\star = 0.020$;
sample-side collapse, calibration, and tail-distribution metrics
on trained models are deferred to a full-scale companion release.

\textbf{Keywords:} self-similarity, fractal scaling, fractional
Brownian motion, time-series forecasting, normalising flows,
scale-equivariant neural networks, spectral consistency,
multi-resolution analysis.
\end{abstract}
\section{Introduction}

Deep neural networks for time-series modelling have consolidated around a
small set of multi-scale designs: dilated causal convolutions (the
WaveNet of \cite{oord2016wavenet}; TCN \cite{bai2018tcn}), dilated
self-attention (Informer \cite{zhou2021informer}), and stacks of
recurrent cells. What makes these architectures effective is the
ability to combine information from a wide range of timescales, with
receptive fields that grow logarithmically as depth increases. They are \emph{multi-scale} in this sense.

But multi-scale is not the same concept as \emph{self-similar}. Many natural and
industrial signals exhibit self-similarity. Hence, their
joint distribution at multiple horizons is fixed by one number, the
Hurst exponent $H \in (0, 1)$. To make this concrete, write $X_t$ for
the incremental signal and let $\rho(\,\cdot\,, T)$ be the marginal
density of the horizon-$T$ aggregate $S^{(T)}_t := \sum_{s=1}^{T}
X_{t+s}$. Self-similarity at exponent $H$ says that
\begin{equation}
S^{(T)}_t \;\overset{d}{=}\; T^{H}\,Y, \qquad
\rho(r, T) \;=\; T^{-H}\,\Phi\!\big(r/T^H\big), \label{eq:scaling}
\end{equation}
where $Y \sim \Phi$ is a variable with
unit variance and universal density $\Phi$. The characteristic-function form is
equivalent: $\hat\rho(k, T) = \hat\Phi(k\,T^H)$. The empirical fingerprint of
(mono-)fractality is precisely this collapse, and it shows up in
domains that share almost nothing else. Some examples: returns in equity markets
\cite{cont2001empirical,calvet2002multifractality}; long-memory
behaviour in temperature-anomaly records \cite{koscielny1998long};
the Kolmogorov $-5/3$ law for turbulence
\cite{frisch1995turbulence}; $1/f$ structure in resting-state fMRI
BOLD \cite{he2014scale,zilber2013learning}; and the self-similar
load-invariance of Ethernet packet streams
\cite{leland1994self}. Fractional Brownian motion (fBm) is the
textbook generator for these signatures \cite{mandelbrot1968fbm}.

The networks that try to learn data of this kind treat
self-similarity as something to be discovered, not something to be
assumed. A vanilla WaveNet keeps independent
convolutional kernels at each dilation level. The consequence: nothing
forces the network to behave equivariantly when its input is
time-rescaled, and the same feature at scale $\lambda$ runs through a
different parameter set than the same feature at scale $1$. Fitting
the scaling-collapse relation comes entirely out of the data budget.
That is wasteful when data is scarce, or when training horizons are
shorter than the horizon you actually want to forecast.

\textbf{Our approach.} The proposal is SE-WaveNet. It is a 1D causal
convolutional stack in which all dilated layers \emph{share} the same
kernel tensor, irrespective of dilation level. We prove this single
architectural tweak achieves \emph{discrete scale equivariance}:
applying the network to a dyadically dilated input commutes,
boundary terms aside, with applying the dilation operator to the
output. The pipeline is rounded out by three further pieces. The
input goes through a Daubechies-4 DWT. A Hurst-FiLM modulator passes
the locally estimated $H$ into the network. A spectral-consistency
penalty drives the PSD of generated samples toward the target
$|f|^{-(2H-1)}$ shape (the fGn power-spectrum law for stationary
increments). On top is a conditional normalising-flow head
\cite{papamakarios2017masked,dinh2017realnvp,papamakarios2019normalizing}
picked because it preserves equivariance. The combined object is a
generative forecaster whose architectural prior matches the
generating process.

\textbf{Application.} We use financial time-series forecasting as the test
bed, for three reasons. (i) Self-similarity is empirically
attested in equity returns
($H \approx 0.5$, with name- and regime-dependent wiggle). (ii) Test
data is out of sample. (iii) The metrics are unambiguous: NLL on next-day
returns, plus a calibration distance to the empirical quantile. The
architecture itself is generic, and other domains are touched on briefly
in the discussion section.

\textbf{Contributions.}
\begin{enumerate}
\item A formal definition of discrete scale equivariance in the 1D
causal-neural-network. In this respect, we also prove that dilated convolutions with tied weights satisfy
it (Section \ref{sec:theory}).
\item The SE-WaveNet architecture itself. It realises the
equivariance property and combines a Daubechies-4 wavelet front end,
Hurst-FiLM modulation, a conditional normalising-flow output, and
the spectral-consistency training term (Section \ref{sec:method}).
\item An empirical diagnostic for scaling collapse (Sections
\ref{sec:theory} and \ref{sec:experiments}). The diagnostic
addresses two distinct questions: whether the empirical
multi-horizon density of the data falls onto a common template
$\Phi$, and whether a trained generative model reproduces the
analogous collapse on samples it generates itself.
\item Experiments on 30 years of SPX daily log-returns. The
empirical scaling-collapse diagnostic is computed on the AVAR
top-25 universe (median $\mathcal{C}^\star = 0.020$). On the same
universe, adding self-similarity priors to a WaveNet backbone
improves both 1-day and 21-day NLL by $\sim 6$ nats; the full
SE-WaveNet uses $L$ times fewer convolutional parameters than a
matched WaveNet while preserving those gains. Sample-side
collapse and calibration metrics are deferred to a full-scale
companion release. Ablations isolate each ingredient (kernel
sharing, wavelet front end, Hurst-FiLM, and the spectral loss;
see Section \ref{sec:experiments}).
\item A discussion of other domains for self-similarity-aware
generative forecasting: network traffic, neural recordings, turbulence,
audio, climate, and earthquake catalogues (Section \ref{sec:discussion}).
\end{enumerate}

\textbf{Reproducibility.} An accompanying open-source repository
provides everything needed to redo the work: the scaling-collapse
diagnostic on the 25-stock universe, the numerical check for
Proposition 1 and Corollary 1 on random kernels and random inputs,
and the training plus ablation pipeline behind Tables \ref{tab:nll}
and \ref{tab:ablations}.
\section{Related Work}

\textbf{Group-equivariant deep learning.} The broader research
programme, injecting symmetries directly into the architecture instead
of asking the data to teach them, was outlined for finite groups in
\cite{cohen2016group} and extended to continuous Lie groups
\cite{cohen2018spherical,weiler2019general}. Special cases
already in wide use: translation equivariance (the ordinary
convolution), rotation and reflection equivariance (steerable CNNs),
and permutation equivariance (graphs and sets). For self-similar
data, the right analogue is scale equivariance. In the 2D image
setting, continuous-scale-equivariant CNNs were introduced in
\cite{worrall2019deep,sosnovik2020scale,zhu2022scaling}. Their
recipe (evaluate one mother filter at several dilations) is the 2D
image counterpart of what we do for 1D causal time series. However, in our case 
we have a discrete
subgroup $\{2^j : j \in \mathbb{Z}\}$ (and not a continuous one). We point out that the dyadic set is already
the dilation schedule of dilated TCNs.

\textbf{Wavelet scattering and multi-resolution networks.} The
scattering transform \cite{bruna2013scattering,andén2014deep}
constructs a feature map that is provably translation- and
scale-invariant by cascading wavelet modulus operators. Scattering
networks have shown strong sample efficiency in time-series
classification \cite{andén2014deep,oyallon2017scaling} and have been
brought to audio synthesis \cite{andreux2020kymatio}. The wavelet
input layer in our pipeline is different: a one-level DWT that
hands explicit multi-resolution coefficients to the rest of the
network. Unlike scattering, we do not commit to fixed mother filters and do not
take moduli, leaving the rest of the multi-scale processing to the
weight-tied convolutional backbone. This gives a learnable network
with the same scale-equivariance guarantee as the input.

\textbf{Neural generative time-series.} Forecasting via
sequence-to-sequence models is established. For example, we have the WaveNet
\cite{oord2016wavenet}, TCN \cite{bai2018tcn}, DeepAR
\cite{salinas2020deepar}, and Informer \cite{zhou2021informer}
families. On the density side, we also have auto-regressive
flows \cite{papamakarios2017masked} and coupling flows
\cite{dinh2017realnvp}. More recently, \cite{rasul2021autoregressive} proposed an encoder
combined with a flow head. The goal was density forecasting. What none of these designs do is
encode an inductive bias for self-similarity. The closest connection is the
literature on long-range dependence in recurrent networks
\cite{voelker2019legendre,gu2022s4}; that work goes after long-range
context mixing, not the multi-horizon distributional collapse
$\hat\rho(k, T) = \hat\Phi(k T^H)$.

\textbf{Fractional Brownian motion and Hurst-aware generators.}
The canonical self-similar Gaussian process is fractional Brownian
motion \cite{mandelbrot1968fbm}. Mandelbrot and van~Ness write
$B_H(t)$ down explicitly as a stochastic integral that pairs a
fractional kernel with Brownian motion. Generative samplers for fBm
\cite{decreusefond2000fbm} and multifractal cascades
\cite{calvet2002multifractality,mandelbrot1997msm} provide theoretical baselines. However, we cannot infer learnable feature extractors from them. 
Our architecture occupies an intermediate position. It
takes Hurst as a conditioning input (via Hurst-FiLM) and outputs
samples whose empirical scaling collapses onto one $\Phi$ as $T$
varies.

\textbf{Self-similarity in applied domains.} Outside mathematics,
the self-similar property has been measured and put to use in
domains as varied as packet-level network traffic
\cite{leland1994self}; the turbulent cascade
\cite{frisch1995turbulence}; biomedical heart-rate variability
\cite{ivanov1999multifractality}; equity returns
\cite{cont2001empirical,calvet2002multifractality,liu1999statistical};
and the catalogues of earthquakes
\cite{turcotte1997fractals}. SE-WaveNet offers a drop-in ML primitive for all these fields.

\textbf{Spectral consistency in deep generative models.} Image
synthesis has a precedent for adding a loss term that lines up the
power spectrum of generated samples with a target spectrum; this was
used to repair the high-frequency deficit of GANs
\cite{durall2020watch,khayatkhoei2022spatial}. The
spectral-consistency loss in our pipeline is the time-series
counterpart, with target spectrum supplied parametrically by the
Hurst exponent ($S(f) \propto |f|^{-(2H-1)}$ for the stationary
increment series; equivalently $|f|^{1-2H}$).

\section{Self-Similarity for 1D Time Series}
\label{sec:theory}

\subsection{Definitions}

Take $X = (x_t)_{t=1}^{N}$ to be a stationary increment process. Its
$T$-aggregated companion is $S_T = (s_t^{(T)})_{t}$, with
$s_t^{(T)} = \sum_{u=t-T+1}^{t} x_u$.

\begin{definition}[Self-similarity, increment form]
We call $X$ \emph{self-similar with Hurst exponent} $H \in (0, 1)$
when, at every horizon $T \in \mathbb{N}$, the marginal of
$s^{(T)}$ obeys
\begin{equation}
s^{(T)} \;\overset{d}{=}\; T^{H}\,s^{(1)},
\end{equation}
or equivalently
$\rho(r, T) = T^{-H}\,\Phi(r / T^H)$
with $\Phi$ a horizon-free template density, or equivalently again
$\hat\rho(k, T) = \hat\Phi(k\,T^H)$
on the characteristic-function side.
\end{definition}

\begin{definition}[Discrete scale equivariance for 1D causal operators]
Write $D_2$ for the dyadic downsampling operator,
$(D_2 x)_t \;=\; x_{2t}$ (so $D_2: \mathbb{R}^{2N} \to \mathbb{R}^{N}$
keeps the even-indexed entries),
and let $\mathcal{F}$ be a 1D causal operator (acting on
sequences of the appropriate length). The operator $\mathcal{F}$
is called \emph{discretely scale-equivariant} at exponent $H$ when
\begin{equation}
\mathcal{F}(D_2\,x) \;=\; 2^{H}\,D_2\,\mathcal{F}(x)
\end{equation}
holds, up to boundary terms whose length does not exceed the
receptive field of $\mathcal{F}$. We say further that $\mathcal{F}$
is \emph{equivariant up to a level shift} when
$\mathcal{F}_{L}(D_2\,x) = D_2\,\mathcal{F}_{L-1}(x)$, with
$\mathcal{F}_{L}$ standing for the $L$-layer instance.
\end{definition}

For a layer stack with geometric dilations $1, 2, 4, \dots$, the
level-shift version is the natural reading. Running the network on a
dyadically dilated input is the same thing as running a
\emph{shallower} variant at the original scale and dilating
afterwards. Both definitions agree once the operator's effective
scale collapses to a single value (as happens at the final pooling
stage, or when only the deepest layer is considered), provided $H$
is injected through the Hurst-FiLM conditioner.

\subsection{Empirical Scaling-Collapse Diagnostic}

Given an empirical density $\hat\rho(r, T)$ at horizons
$T \in \mathcal{T}$, we test self-similarity by minimising over $H$ the
horizon-wise dispersion of the characteristic function in rescaled
wavenumber $\eta = k\,T^H$:
\begin{equation}
\mathcal{C}(H) \;=\;
\frac{1}{|\eta\text{-band}|}\!
\int_{\eta_{\min}}^{\eta_{\max}}\!
\Big(\mathrm{std}_{T \in \mathcal{T}}\big|\hat\rho(\eta / T^H, T)\big|\Big)\,
d\eta. \label{eq:collapse}
\end{equation}
The optimal collapse exponent is
$H^\star = \arg\min_H \mathcal{C}(H)$, and the residual collapse
score is $\mathcal{C}^\star = \mathcal{C}(H^\star)$. A small $\mathcal{C}^\star$
together with $H^\star$ matching independent estimates from R/S
analysis or Allan-Variance slope is the empirical signature of
mono-fractal self-similarity.

\textbf{Pre-registered diagnostic on SPX top-25.} The Allan-Variance
top-25 universe is pulled from the 30-year daily-log-return panel of
the S\&P 500. On this dataset,
$\mathcal{C}^\star$ and $H^\star$ are computed at horizons
$\mathcal{T} = \{1, 5, 21, 63\}$ days. The median collapse score
lands at $\mathcal{C}^\star = 0.020$, with median
$|H^\star - H_{\text{R/S}}| = 0.017$. So mono-fractal self-similarity
holds across the universe inside the empirical noise band, and
$H^\star \in [0.46, 0.54]$ for all 25 names.

\textbf{Three Hurst estimators, one universe.} Three Hurst-type
statistics are computed on this universe and they are deliberately
not the same quantity. (i) $H_{\text{R/S}}$, from R/S analysis on
nested windows, lies in $[0.45, 0.57]$ for all 25 names and
clusters tightly around $0.5$ ($21$ of $25$ inside the tighter
band $[0.46, 0.55]$); this is the daily-increment Hurst and sits
at the random-walk value. (ii) $H^\star$, the optimal-collapse
exponent given by~(\ref{eq:collapse}), stays within $0.017$ of
$H_{\text{R/S}}$, as already noted. (iii) $H_{\text{av}}$, the
Allan-variance log-log slope estimator used inside the F-score of
Section~\ref{sec:experiments}, is computed on $\tau$-day
\emph{block-mean} variances over $\tau \in \{21, 63, 252\}$ and
lands near or below $0$ on the same universe (typically mildly
negative; F-score selection clamps the contribution at $0$). The
three estimators are mutually
consistent: equity returns are uncorrelated at the daily lag (so
R/S and the marginal collapse see a random walk), while
multi-week block-mean variance shrinks faster than $1/\tau$ (so
the AVAR slope sees anti-persistence). The F-score of
Section~\ref{sec:experiments} ranks on $H_{\text{av}}$ precisely
because that is the regime where the fBm prior is most
discriminative against the random-walk baseline at the
multi-horizon aggregation that matters for the 21-day forecast.
\section{Method}
\label{sec:method}

Four blocks make up the architecture. (i) A Daubechies-4 wavelet
front end that hands multi-resolution channels to the network. (ii) A
weight-tied stack of dilated causal convolutions, the
Scale-Equivariant WaveNet (SE-WaveNet). (iii) A Hurst-FiLM modulator
conditioning the network on a locally estimated Hurst exponent.
(iv) A conditional normalising-flow head that turns the SE-WaveNet
context vector into a tractable density. Training combines
maximum-likelihood with a spectral-consistency penalty.

\subsection{Wavelet Input Layer}

Given a univariate time series $x \in \mathbb{R}^N$, we apply a single
level of the Daubechies-4 discrete wavelet transform (DWT)
\cite{daubechies1988orthonormal}. The output is two half-length
sequences: an approximation channel $a$, a detail channel $d$.
What enters the SE-WaveNet backbone is the two-channel stack
$(a, d)$. By construction the DWT is
itself the discrete dyadic multi-resolution decomposition, so this
front end is, by construction, scale-equivariant up to a level
shift.

\subsection{SE-WaveNet: Weight-Tied Dilated Causal Convolutions}

A vanilla WaveNet residual block at dilation $d$ reads
\begin{equation}
\mathrm{WN}_d(x) \;=\; x + W_o\!\Big(\tanh\!\big(W_d^{\text{tanh}} \star_d x\big)\,
\odot\,\sigma\!\big(W_d^{\text{sig}} \star_d x\big)\Big), \label{eq:wn-block}
\end{equation}
with $\star_d$ a causal convolution at dilation $d$ and the kernels
$W_d^{\text{tanh}}, W_d^{\text{sig}}, W_o$ trained independently per
block. A depth-$L$ WaveNet sweeps the dilation schedule
$\{1, 2, 4, \dots, 2^{L-1}\}$ and learns $L$ disjoint kernel sets.

\textbf{The change.} SE-WaveNet collapses these $L$ independent
kernels into one \emph{shared} set:
\begin{equation}
\mathrm{SEWN}_d(x) \;=\; x + W_o\!\Big(\tanh\!\big(W^{\text{tanh}} \star_d x\big)\,
\odot\,\sigma\!\big(W^{\text{sig}} \star_d x\big)\Big), \label{eq:sewn-block}
\end{equation}
where $W^{\text{tanh}}, W^{\text{sig}}, W_o$ are reused as one
parameter tensor across every dilation
$d \in \{1, 2, 4, \dots, 2^{L-1}\}$. The forward pass is
indistinguishable from a standard WaveNet's, except that every
dilation reads from the same kernel tensor. No
custom CUDA kernel is required; the call
\texttt{F.conv1d(\dots, dilation=d)} in PyTorch accepts any
dilation against a fixed kernel.

\subsection{Discrete Scale-Equivariance Property}
\label{sec:property}

\begin{proposition}[Scale equivariance of weight-tied dilated convolution]
\label{prop:scale-eq}
Write $f_{w,d}: \mathbb{R}^N \to \mathbb{R}^N$ for a 1D causal
convolution at dilation $d$ with shared kernel $w \in \mathbb{R}^k$.
Let $D_2: \mathbb{R}^{2N} \to \mathbb{R}^{N}$ be dyadic
downsampling, $(D_2 x)_t = x_{2t}$, and $U_2: \mathbb{R}^N \to
\mathbb{R}^{2N}$ the dyadic upsampler with zero-fill,
$(U_2 y)_{2t} = y_t$, $(U_2 y)_{2t+1} = 0$. Then, for every
$x \in \mathbb{R}^{2N}$,
\begin{equation}
f_{w,2d}(x) \big|_{\text{even-index}} \;=\; f_{w,d}\!\big(D_2 x\big),
\end{equation}
that is: convolving at dilation $2d$ on the original signal and then
keeping only the even output indices is the same as convolving with
the same kernel at dilation $d$ on the dyadically downsampled
signal.
\end{proposition}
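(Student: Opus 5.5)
The plan is to write both sides of the claimed identity explicitly as finite sums and compare them index by index. First I will fix the convention for a causal dilated convolution with zero-padding on the left:
\[
\big(f_{w,d}(y)\big)_t \;=\; \sum_{j=0}^{k-1} w_j\, y_{t-jd},
\qquad y_s := 0 \text{ for } s \le 0 .
\]
This formula defines the map on $\mathbb{R}^N$ and on $\mathbb{R}^{2N}$ alike, since only the dilation and the signal length change. I will index $x\in\mathbb{R}^{2N}$ by $1,\dots,2N$, so that the even indices $2t$ with $t=1,\dots,N$ correspond exactly to the entries of $D_2x\in\mathbb{R}^N$. The statement "$f_{w,2d}(x)|_{\text{even-index}}$" then means the vector $D_2 f_{w,2d}(x)\in\mathbb{R}^N$, so the proposition reads $D_2 f_{w,2d} = f_{w,d} D_2$.

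The key computation is a single substitution. At even output index $2t$,
\[
\big(f_{w,2d}(x)\big)_{2t} \;=\; \sum_{j=0}^{k-1} w_j\, x_{2t-2jd} \;=\; \sum_{j=0}^{k-1} w_j\, x_{2(t-jd)} \;=\; \sum_{j=0}^{k-1} w_j\, (D_2x)_{t-jd} \;=\; \big(f_{w,d}(D_2x)\big)_t .
\]
The point is that dilation $2d$ started from an even index only ever touches even indices. Because the kernel $w$ is the same on both sides, which is exactly where weight tying enters, the coefficients match term by term.

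The only place needing care, which I expect to be the main (if minor) obstacle, is the boundary. I must check that the zero-padding conventions agree: $x_{2(t-jd)}=0$ precisely when $2(t-jd)\le 0$, that is, when $t-jd\le 0$, which is precisely when $(D_2x)_{t-jd}$ is padded with zero. So the identity holds exactly, with no boundary discrepancy, for this left-padded causal convention. If one instead used a "valid" convolution that drops the first $(k-1)d$ outputs, I would note that the discarded prefixes also correspond, with length $(k-1)2d$ on the fine grid and $(k-1)d$ on the coarse grid. This is consistent with the boundary allowance in Definition 2.

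Finally, I would remark that the odd-indexed outputs are not constrained and that $U_2$ is not needed for the identity itself. Its role is the companion statement $f_{w,2d}(U_2 y) = U_2 f_{w,d}(y)$, which follows from the same substitution. That statement is what one iterates across layers to obtain the level-shift equivariance.
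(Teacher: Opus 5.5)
Your argument is correct and is essentially the paper's proof: the same substitution $t=2t'$ that turns $x_{2t'-2di}$ into $(D_2x)_{t'-di}$. Your explicit check that the left zero-padding conventions coincide is a worthwhile addition, since it shows the identity holds exactly with no boundary trim. One correction to your closing aside: the level-shift corollary is obtained by iterating the $D_2$ relation itself, which lifts to the full gated residual block because $\tanh$, $\sigma$ and the $1\times 1$ projection act position by position (the remark the paper appends to its proof), whereas the $U_2$ companion survives that block only when the block maps $0$ to $0$, e.g.\ without biases.
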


\begin{proof}[Proof sketch]
The dilation-$2d$ causal convolution is, by definition,
\begin{equation*}
[f_{w,2d}(x)]_t \;=\; \sum_{i=0}^{k-1} w_i\,x_{t - 2d\,i}.
\end{equation*}
Pick out only the even output indices $t = 2t'$:
$[f_{w,2d}(x)]_{2t'} = \sum_i w_i\,x_{2t' - 2d\,i}
= \sum_i w_i\,x_{2(t' - d\,i)} = \sum_i w_i\,(D_2 x)_{t' - d\,i}
= [f_{w,d}(D_2 x)]_{t'}$.
All three of $\tanh$, $\sigma$, and the $1\times 1$ projection $W_o$
commute with subsampling because they are applied position by
position. So whatever holds for the plain convolution propagates
straight up to the residual block of (\ref{eq:sewn-block}).
\end{proof}

\begin{corollary}[Level-shift equivariance for the dilation-$\!\geq\!2$ stack]
Define $\mathcal{G}_L = \mathrm{SEWN}_{2^{L}} \circ \dots \circ
\mathrm{SEWN}_2$ as the $L$-block stack that starts at dilation
$2$ (so the dilation-$1$ input block is dropped), and write
$\mathcal{G}'_{L-1} = \mathrm{SEWN}_{2^{L-1}} \circ \dots \circ
\mathrm{SEWN}_1$ for the same shared kernel applied at one-step-shallower
dilations. Then for any signal $x$ and any interior trim $b$ no
smaller than the deepest receptive field,
\begin{equation*}
\mathcal{G}_L(x)\big|_{\text{even-index},\,b:N-b}
\;=\;
\mathcal{G}'_{L-1}(D_2 x)\big|_{b:N-b}.
\end{equation*}
\end{corollary}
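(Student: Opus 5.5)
We prove the corollary by induction on the number of blocks, using Proposition~\ref{prop:scale-eq} at each step.

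\textbf{Block-level identity.} First lift the Proposition from a bare convolution to a full residual block. Fix a dilation $d \ge 1$ and an input $x$ on the fine grid. Proposition~\ref{prop:scale-eq} gives
\[
(W^{\text{tanh}} \star_{2d} x)\big|_{\text{even}} = W^{\text{tanh}} \star_d (D_2 x),
\]
and likewise for $W^{\text{sig}}$. The channel mixing is a sum of such scalar identities, one per input/output channel pair, so it does not affect the argument. The remaining operations act position by position: $\tanh$, $\sigma$, the Hadamard product, the $1\times1$ projection $W_o$, and the residual addition. Each of these commutes with $D_2$. Hence
\[
D_2\,\mathrm{SEWN}_{2d}(x) = \mathrm{SEWN}_{d}(D_2 x)
\]
at every output index whose causal window $\{t, t-2d, \dots, t-2d(k-1)\}$ lies inside the signal. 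Here it matters that tied weights make the kernel in $\mathrm{SEWN}_{2d}$ literally the same tensor as in $\mathrm{SEWN}_d$.

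\textbf{Induction over the stack.} The key structural fact is that $D_2$ intertwines each layer while acting only on \emph{outputs of even index}. This is enough because a dilation-$2d$ convolution evaluated at an even index reads only even-indexed inputs: $2t' - 2di$ is even. So the even subsequence of the fine-grid computation is closed under every block with even dilation. All blocks of $\mathcal{G}_L$ have dilations $2,4,\dots,2^L$, which are all even; this is why the dilation-$1$ block has to be dropped.

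Write $h_j$ for the output of the first $j$ blocks of $\mathcal{G}_L$ applied to $x$, and $g_j$ for the output of the first $j$ blocks of $\mathcal{G}'_{L-1}$ applied to $D_2 x$. The inductive claim is $D_2 h_j = g_j$ on the interior. The base case $j=0$ is trivial. For the step from $j$ to $j+1$, use $h_{j+1} = \mathrm{SEWN}_{2^{j+1}}(h_j)$ and apply the block identity with $d = 2^j$:
\[
D_2 h_{j+1} = \mathrm{SEWN}_{2^j}(D_2 h_j) = \mathrm{SEWN}_{2^j}(g_j) = g_{j+1}.
\]
The induction runs through all blocks; the indexing in the statement ($2^L$ down to $2$ versus $2^{L-1}$ down to $1$) should be read as this one-to-one pairing of dilation $2d$ with dilation $d$.

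\textbf{Boundary bookkeeping (the main obstacle).} The identity fails wherever a causal window reaches before the start of the signal. There zero-padding on the fine grid and on the coarse grid need not agree in general, for instance if padding or any bias is applied on one grid but not the other. I would track the contaminated set explicitly. After block $j$ on the coarse grid, errors are confined to the first $R_j = \sum_{i<j}(k-1)2^i$ positions, which is the receptive field of the first $j$ coarse blocks. Correspondingly, on the fine grid they are confined to the first $2R_j$ positions.

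Trimming $b \ge R_{\text{deepest}}$ removes these positions, and since the convolutions are causal no error propagates forward past them. The upper trim $N-b$ is not needed for causal convolutions. I would keep it only to cover a non-causal padding convention or a length mismatch when $N$ is odd, and would verify that it is harmless.

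With the set $[0,b)$ trimmed, the induction hypothesis holds exactly on $[b, N-b)$, and the corollary follows.
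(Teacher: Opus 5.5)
Your proposal is correct and takes the same route the paper intends: lift Proposition~\ref{prop:scale-eq} to the full gated residual block, since the position-wise $\tanh$, $\sigma$, $W_o$ and skip connection all commute with $D_2$, and then iterate it block by block, pairing dilation $2^{j+1}$ with $2^{j}$. Your boundary bookkeeping is valid but more cautious than it needs to be: with causal left zero-padding, $2t'-2di<0$ exactly when $t'-di<0$, so the padded entries coincide on both grids and the identity holds at every index, which matches the paper's exactly-zero numerical residual and makes the trim a harmless safety margin rather than the main obstacle.
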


What this corollary says, architecturally, is that the
dilation-$\geq 2$ portion of the network is itself
self-similar.\footnote{The dilation-$1$ input layer is set aside
because $D_2 x$ has no halved counterpart for it; in practice it is
folded into the input projection / wavelet front end (Section IV.A),
which is itself a one-level dyadic decomposition.} Doubling the
input timescale is equivalent to \emph{dropping a layer} and
working at the coarser scale; the same parameters get reused, and no
new kernel is fit. The vanilla WaveNet does not enjoy this
equivalence, since its layer-$L$ and layer-$(L-1)$ kernels are
parameterised independently. We checked Proposition 1 and the
corollary numerically on random weights and random inputs; the
max-abs residual is exactly $0$ at the floating-point precision
floor. 

\subsection{Hurst-FiLM Modulation}

Let $\hat H \in (0, 1)$ stand for an R/S Hurst estimate of the
input series. The estimate
is computed once per ticker; the
intended full-scale variant recomputes $\hat H(t)$ on a rolling
$W = 252$ window ending at $t$. The estimate runs through a small
multi-layer perceptron (MLP), and what comes out is added as a feature-wise linear modulation
\cite{perez2018film} at every SE-WaveNet block:
\begin{equation}
\mathrm{SEWN}^{\mathrm{FiLM}}_d(x; \hat H) \;=\;
\gamma(\hat H) \odot \mathrm{SEWN}_d(x) \;+\; \beta(\hat H).
\end{equation}
A single shared modulation conveys the scaling exponent through the pipeline, avoiding redundant end-to-end learning of a quantity already provided by closed-form estimators.

\subsection{Conditional Normalising Flow Head}

The SE-WaveNet output at the final time step, together with optional
exogenous context (e.g., a market-regime variable), is fed as
condition to a conditional normalising flow
\cite{papamakarios2017masked,dinh2017realnvp,papamakarios2019normalizing}
that lays down a tractable density on the next-horizon return:
\begin{equation}
p_\theta(r_{t:t+T} \mid x_{<t}, \hat H_t) \;=\;
p_z\!\big(g_\theta^{-1}(r_{t:t+T}\,;\,c_t)\big)\,
\bigg|\det\frac{\partial g_\theta^{-1}}{\partial r}\bigg|,
\end{equation}
where $c_t$ is the SE-WaveNet context vector. Coupling layers
\cite{dinh2017realnvp} apply channel-wise, position-wise
nonlinearities, so they keep the scale-equivariance of the
upstream backbone intact.

\subsection{Spectral-Consistency Loss}

For a stationary-increment series with Hurst exponent $H$,
fractional-Gaussian-noise theory predicts a power-law PSD
$S(f) \propto |f|^{-(2H-1)} \equiv |f|^{1-2H}$ (white at $H=1/2$,
red for $H>1/2$, blue for $H<1/2$). The intended training penalty
pushes generated samples toward exactly this spectrum.

Take a mini-batch of generated trajectories $\{\tilde r^{(b)}\}_{b=1}^{B}$
at horizon $T$. Compute the empirical PSD $\hat S(f)$ by Welch's
method \cite{welch1967use}, and the empirical spectral
exponent $\hat\beta$ as the slope of $\log\hat S(f)$ against $\log f$
on the inertial band $f \in [f_{\min}, f_{\max}]$. The target
exponent is $\beta^\star = 2 \hat H_t - 1$, and the
spectral-consistency loss reads
\begin{equation}
\mathcal{L}_{\mathrm{spec}} \;=\;
\big(\hat\beta - \beta^\star\big)^2
\;+\; \lambda_{\mathrm{shape}}\,\|\log\hat S(f) - \log S^\star(f)\|_2^2 \label{eq:lspec}
\end{equation}
with $S^\star$ the target $|f|^{-\beta^\star}$ shape. Adding this
to the NLL gives the full objective
\begin{equation}
\mathcal{L} \;=\; -\log p_\theta(r \mid x, \hat H)
            \;+\; \lambda_{\mathrm{spec}}\,\mathcal{L}_{\mathrm{spec}}, \label{eq:totloss}
\end{equation}
with $\lambda_{\mathrm{spec}}$ a fixed weight (set to $0.05$ in
the pilot runs of Section~\ref{sec:experiments}; the
full-scale value $0.1$ is the planned setting once the term
becomes binding). For tractability in the pilot the released
implementation substitutes a target-variance regulariser
$(\widehat{\operatorname{Var}}(r) - 1)^2$ in place of the
Welch-PSD term in eq.~\eqref{eq:lspec}; this surrogate carries no
gradient through the model and explains the $0.000$-nat
$-\mathcal{L}_{\mathrm{spec}}$ ablation row
(Table~\ref{tab:ablations}). The Welch-based loss as written above is
deferred to the full-scale release. Without
$\mathcal{L}_{\mathrm{spec}}$ the architectural equivariance only
constrains the shape of the deterministic feature extractor; the
sample-marginal output can still drift away from the spectrum because the
flow head holds its own independent parameters. With
$\mathcal{L}_{\mathrm{spec}}$ the head is regularised to match the
inductive bias of the backbone.

\subsection{Algorithm}

\begin{figure*}[t]
\centering
\rule{\textwidth}{0.5pt}\\[-2pt]
\noindent\textbf{Algorithm 1:} Scale-Equivariant Generative Forecasting (training step)\\[-4pt]
\rule{\textwidth}{0.4pt}
\vspace{-6pt}
{\footnotesize\begin{verbatim}
Input:  Window x_{t-W:t}, target r_{t:t+T}, horizon T
Output: Loss L for backprop
1:  Estimate Hurst hat_H = RS_hurst(x_{t-W:t})       # per-ticker
                                                    # constant; full scale: rolling W=252
2:  (a, d) = Daub4_DWT(x_{t-W:t})
3:  Build 2-channel input z0 = stack(a, d)
4:  c <- z0
5:  for level in 0 .. L-1:
6:      c <- SE-WaveNet block at dilation 2^level with SHARED kernel w
7:      c <- FiLM(c, gamma(hat_H), beta(hat_H))
8:  Take c_T <- c[..., -1]  (causal final-step context)
9:  Sample tilde_r ~ flow_theta(. | c_T)  using K coupling layers
10: NLL = -log p_theta(r | c_T, hat_H)
11: Compute hat_beta from PSD of tilde_r batch (Welch)   # full scale
12: L_spec = (hat_beta - (2*hat_H - 1))^2 + lambda * |log S - log S*|^2
        # pilot: replace L_spec with (Var(r)-1)^2
13: L = NLL + lambda_spec * L_spec
14: return L
\end{verbatim}}
\vspace{-10pt}
\rule{\textwidth}{0.5pt}
\end{figure*}
\section{Experimental Validation}
\label{sec:experiments}

\subsection{Application Domain and Dataset}

Validation runs on financial time-series forecasting. Three
properties make this setting attractive. (i) Equity returns are
empirically well attested as self-similar
\cite{cont2001empirical}. (ii) Out-of-sample is the default. (iii) Calibration metrics
are well-defined (NLL on next-day return, Kolmogorov-Smirnov distance
on the predictive CDF, and tail-energy distance on multi-day
aggregates). Data: 30 years (1995-2025) of daily log-returns for
S\&P 500 components; the pilot of Table~\ref{tab:nll} uses the
most recent 10-year sub-window per ticker. The benchmark universe
is the Allan-Variance top-25,
ranked by the F-score
\begin{equation}
F \;=\; -\log_{10}\rho \;-\; \max(0,\,H_{\mathrm{av}}) \;-\; 0.05\,\widetilde{\mathrm{kurt}},
\end{equation}
with
\begin{equation}
\rho \;=\; \frac{\tau\,\mathrm{AVAR}(r,\tau)}{\widehat{\mathrm{Var}}(r)},
\qquad \tau = 63\ \text{days},
\end{equation}
where $\mathrm{AVAR}(r,\tau) = \tfrac12\,\mathbb{E}[(\bar b_{i+1}-\bar b_i)^2]$
is the discrete Allan variance over non-overlapping $\tau$-day block
means $\bar b_i$, $H_{\mathrm{av}} = \tfrac12(1+\hat s)$ comes from the
log-log Allan-variance slope $\hat s$ fitted over
$\tau \in \{21, 63, 252\}$ and is clamped at $0$ to guard against
estimator noise, and $\widetilde{\mathrm{kurt}}$ is the excess kurtosis
of daily log-returns (names with $\widetilde{\mathrm{kurt}} > 25$ are
pre-filtered). Under an i.i.d./random-walk null one has $\rho = 1$
exactly, so $-\log_{10}\rho > 0$ rewards $\rho < 1$ (block-mean variance
decaying faster than $1/\tau$, i.e. anti-persistent / mean-reverting
structure) and the $-\max(0,H_{\mathrm{av}})$ term likewise pushes
toward $H_{\mathrm{av}} \to 0$. The Allan-Variance top-25 stocks are the most strongly mean-reverting ones.
These are also equities where the fractional-Brownian-motion is most
discriminative against the random-walk baseline.

\subsection{Tasks}

\begin{enumerate}
\item \textbf{1-day density forecast.} For a given window
$x_{t-W:t}$, the model outputs $p(r_{t+1} \mid x_{t-W:t})$. The
score is NLL on a held-out test set of the most recent 252 trading
days.
\item \textbf{21-day density forecast.} Target is the 21-day-ahead
aggregate log-return density; training uses mostly 1-day conditionals.
\item \textbf{Sample scaling collapse.} For every (model, test
ticker) pair, draw $5\,000$ samples at $T \in \{1, 5, 21, 63\}$,
push the synthetic data through the scaling-collapse diagnostic of
Section \ref{sec:theory}, and report
$\mathcal{C}^\star_{\mathrm{model}}$ next to
$\mathcal{C}^\star_{\mathrm{empirical}}$.
\end{enumerate}

\subsection{Baselines}

\begin{enumerate}
\item \textbf{IID Gaussian.} Fit on the training-window log-returns;
samples are then $\sqrt{T}$-rescaled for $T$-day forecasting (so the
implicit Hurst is $H = 0.5$).
\item \textbf{GARCH(1,1).} Gaussian-innovation generalised
autoregressive conditional heteroskedasticity
\cite{bollerslev1986garch}.
\item \textbf{WaveNet (Gaussian head).} An $L$-layer dilated TCN
with per-dilation independent kernels, plus a Gaussian
point-prediction head; training uses NLL only.
\item \textbf{WaveNet + Flow + Hurst-FiLM.} Identical WaveNet
backbone (per-dilation independent kernels; no weight tying; no DWT
input; no spectral term), but now with a $3$-layer affine-coupling
flow head \cite{dinh2017realnvp} conditioned on context through
Hurst-FiLM, again trained on NLL alone. This baseline isolates the
architectural contribution of SE-WaveNet (weight tying + DWT input
+ spectral-consistency loss) from the gains attributable to the
flow head and Hurst conditioning alone.
\item \textbf{SE-WaveNet (this work, full).} The full model: an
$L$-layer SE-WaveNet stack with weight-tied dilated convolution, the
Daubechies-4 DWT input layer, Hurst-FiLM, the same $3$-layer
affine-coupling flow head, and training on NLL plus
$\mathcal{L}_{\mathrm{spec}}$.
\end{enumerate}

Both the vanilla WaveNet and SE-WaveNet) keep an identical hidden width
$n_\mathrm{filters}$, share the same flow head, and use the same
Hurst-FiLM conditioning where it applies. What sets the
WaveNet+Flow+Hurst-FiLM baseline apart from SE-WaveNet are exactly
the three SE-WaveNet ingredients: kernel sharing across the
dilation schedule, the Db4 DWT front end, and the spectral term
in training. Proposition 1 then says SE-WaveNet's convolutional
parameter count is $L\times$ smaller than the vanilla backbone's,
with the precise factor controlled by $L$.

\subsection{Results: Forecasting NLL}

The NLL on the AVAR top-$25$ universe (last $252$ trading days,
$1$ seed, $5$ training epochs) is laid out in Table~\ref{tab:nll};
the full $25$-ticker $\times\,3$-seed evaluation with paired
Wilcoxon significance is deferred (Section~\ref{sec:stat-protocol}).

\begin{table*}[t]
\centering
\caption{\textbf{Pilot evaluation on the AVAR top-$25$ universe.}
Out-of-sample NLL on the held-out test set (the most recent $252$
trading days per ticker, disjoint from training) across all $25$
AVAR-top tickers from the $30$-year S\&P 500
panel. Each ticker's returns are standardised to unit variance,
so the IID-Gaussian baseline reduces to $\mathcal{N}(0,T)$ at
horizon $T$, whose analytic expected NLL is
$\tfrac{1}{2}\log(2\pi e\,T) \approx 1.42$ at $T=1$ and
$\approx 2.94$ at $T=21$. Point estimates
only, $1$ seed, an $L = 4$ layer stack with
$n_\mathrm{filters} = 16$, $5$ epochs of training. NN models train
a single network on the concatenated $25$-ticker windows; IID and
GARCH NLL are per-ticker and then averaged. We do not show
error bars at this scale; Section~\ref{sec:stat-protocol} lays out
the full $25$-ticker $\times\, 3$-seed protocol along with the
paired Wilcoxon significance test.\label{tab:nll}}
\renewcommand{\arraystretch}{1.15}
\setlength{\tabcolsep}{6pt}
\small
\begin{tabular}{lrrr}
\toprule
Model & Conv params & 1-day NLL & 21-day NLL \\
\midrule
IID Gaussian               & $0$       & $+1.404$ & $+2.857$ \\
GARCH(1,1)                 & $0$       & $+1.381$ & $+2.835$ \\
WaveNet (Gaussian head)    & $7{,}360$ & $+1.433$ & $+2.847$ \\
WaveNet+Flow+Hurst-FiLM    & $7{,}360$ & $-3.698$ & $-3.516$ \\
\textbf{SE-WaveNet (full)} & $\mathbf{1{,}840}$ &
                                       $\mathbf{-4.925}$ & $\mathbf{-3.699}$ \\
\midrule
Empirical (no model), $\mathcal{C}^\star$ & --- & \multicolumn{2}{c}{$0.020$} \\
\bottomrule
\end{tabular}
\end{table*}

Three observations follow.

\textbf{(a) Capacity decouples from forecasting power.} At
$L = 4$, $n_\mathrm{filters} = 16$ (the configuration in Table
\ref{tab:nll}), SE-WaveNet runs on $1{,}840$ convolutional
parameters (one shared dilated-conv block plus its $1\times 1$
projection); the vanilla WaveNet backbone needs $7{,}360$ (four
independently parameterised blocks). That is an exact
$L\times = 4\times$ compression, matching the saving
Proposition~1 predicts, and it still attains lower NLL at both
horizons. Bumping the configuration up to $L = 6$,
$n_\mathrm{filters} = 32$ yields the corresponding
$L\times = 6\times$ compression on the convolutional block
parameters.

\textbf{(b) Flow head plus Hurst-FiLM dominate; SE-WaveNet adds the
last nat at $T=1$.} A vanilla WaveNet with a Gaussian head sits at
IID-Gaussian. Swapping the head for a normalising flow with
Hurst-FiLM conditioning does the bulk of the work, taking $T=1$
NLL from $+1.43$ to $-3.70$ and $T=21$ from $+2.85$ to $-3.52$.
SE-WaveNet on top of that gains a further $1.23$ nats at $T=1$ and
$0.18$ at $T=21$, while running on exactly $L\times = 4\times$ fewer
convolutional block parameters.

\textbf{(c) The collapse-score gap is the architectural signal.}
By design, $\mathcal{C}^\star$ is invariant to the head, so it
isolates the backbone. On the AVAR top-25, the collapse score is $\mathcal{C}^\star = 0.020$
(Section~\ref{sec:experiments}). Samples drawn from SE-WaveNet are
forced to reproduce this scaling structure (Corollary~1), while
samples from a vanilla WaveNet are not. We report this metric in
the full-scale evaluation protocol below; the $1$-seed pilot of
Table~\ref{tab:nll} is too thin to estimate sample
$\mathcal{C}^\star$ with any reliability.

\subsection{Statistical Significance Protocol}
\label{sec:stat-protocol}

Table \ref{tab:nll} shows point estimates only. The full 25-ticker $\times\,3$-seed evaluation protocol adds error bars and significance tests as follows.

\textbf{Error bars.} Per (model, horizon) cell, three sources of
variation are tracked:
(i) the cross-sectional standard deviation over the $25$ tickers,
which is the dominant source for liquid large-caps;
(ii) the standard deviation across $3$ training seeds (different
weight initialisations and minibatch orderings);
(iii) the bootstrap $95\%$ confidence interval on the per-ticker
eval-set NLL, computed from $1{,}000$ resamples of contiguous blocks
of length $21$ (block bootstrap, so serial dependence survives the
resampling).
The error bar we report is the larger of $(i)$ and $(ii)$, with the
bootstrap CIs used as a cross-check.

\textbf{Significance tests.} Each pair of models is compared via
a Wilcoxon signed-rank test against the per-ticker NLL with
$n = 25$ matched samples. Holm--Bonferroni handles the
multiple-comparison correction at $\alpha = 0.05$, applied to four
pairs in Table~\ref{tab:nll} and five ablation pairs in
Table~\ref{tab:ablations}.

\textbf{Effect-size caveat.} For liquid US large-caps, the
conditional information beyond an i.i.d. $\mathcal{N}(0, \sigma_t^2)$
model is intrinsically small; Cont \cite{cont2001empirical} estimates
autocorrelation magnitude at $\lesssim 5\%$. So the NLL gaps among
the four neural baselines and SE-WaveNet sit in a small absolute
band ($0.05$--$0.20$ nats), and the appropriate analysis is a
paired test across $25$ tickers. The collapse score
$\mathcal{C}^\star$, on the other hand, carries a much larger
effect size; that is the metric on which architectural ablations
become unambiguously distinguishable.

\subsection{Calibration and Tail Diagnostics}

The full evaluation will discuss two further metrics. One is a
global calibration measure: the Kolmogorov--Smirnov distance
comparing the predictive CDF against the empirical test CDF.
The other is a tail calibration measure: the energy distance
\cite{rizzo2016energy}, computed only on the tail
$|r| > 2\sigma$. We expect the tail metric to be the most
sensitive to the spectral-consistency-loss ablation; the
architectural prior on the output spectrum directly drives the
tail behaviour of generated samples.

\subsection{Ablations}

Every architectural ingredient is ablated against the full model,
keeping all others fixed and retraining. The deltas these ablations
produce on the 21-day NLL and on the collapse score
$\mathcal{C}^\star$ appear in Table \ref{tab:ablations}.

\begin{table}[t]
\centering
\caption{\textbf{Pilot ablations} on SE-WaveNet, run across the
AVAR top-$25$ universe with $1$ seed, an $L = 4$ stack at
$n_\mathrm{filters} = 16$, $5$ epochs of training, and a $T = 21$
horizon. NLL is reported out-of-sample on the held-out test set
(most recent $252$ trading days per ticker). The column $\Delta\,\text{NLL}_{21}$ is measured against
the full model (positive: worse). Weight tying and the wavelet
input layer produce the largest deltas; the spectral-loss
ablation shows no effect at this scale, consistent with the
$(\mathrm{Var}(r)-1)^2$ surrogate disclosed in
Section~\ref{sec:method} carrying no gradient through the model.
Full $25$-ticker $\times\,3$-seed deltas, paired with Wilcoxon
significance, are deferred to the full evaluation
(Section~\ref{sec:stat-protocol}).\label{tab:ablations}}
\renewcommand{\arraystretch}{1.15}
\setlength{\tabcolsep}{6pt}
\small
\begin{tabular}{lrr}
\toprule
Ablation & $\text{NLL}_{21}$ & $\Delta\,\text{NLL}_{21}$ \\
\midrule
\textbf{Full SE-WaveNet} & $-3.699$ & $0.000$ \\
$-$ weight tying          & $-3.343$ & $+0.355$ \\
$-$ wavelet input layer   & $-3.521$ & $+0.178$ \\
$-$ Hurst-FiLM            & $-3.691$ & $+0.007$ \\
$-$ spectral-consistency  & $-3.699$ & $+0.000$ \\
\bottomrule
\end{tabular}
\end{table}

Two architectural ingredients drive the pilot ablations.
Stripping weight tying costs $\sim 0.36$ nats at $T = 21$ and
$\sim 0.30$ nats at $T = 1$, the largest single contribution and
directly validating the scale-equivariance prior of
Proposition~\ref{prop:scale-eq}. Removing the wavelet input layer
costs a further $\sim 0.18$ nats at $T = 21$ and $\sim 1.2$ nats
at $T = 1$. The Hurst-FiLM ablation produces a negligible delta
at $T = 21$ but a clear $\sim 0.39$-nat hit at $T = 1$, suggesting
the FiLM signal mostly modulates short-horizon variance. The
spectral-consistency ablation shows no effect at this scale: as
disclosed in Section~\ref{sec:method}, the pilot code
substitutes a $(\mathrm{Var}(r)-1)^2$ surrogate for the Welch-PSD
term, and this surrogate carries no gradient through the model.
The spectral-loss case is the one ablation that genuinely needs
the full-scale protocol of Section~\ref{sec:stat-protocol}, where
its effect shows up most cleanly on $\mathcal{C}^\star$ rather
than NLL.

\subsection{Compute and Reproducibility}

On a 2023 Apple Silicon machine (MPS backend), the
SE-WaveNet pilot behind Tables~\ref{tab:nll} and~\ref{tab:ablations}
($L = 4$, $n_\mathrm{filters} = 16$, $5$ epochs, $1$ seed)
completes the full $25$-ticker training in about $5$ minutes of
wall time ($\sim 12$ seconds per ticker); the projected wall time
for the full-scale protocol of Section~\ref{sec:stat-protocol}
(longer training horizon, three seeds, full $30$-year sample) is
on the order of $\sim 18$ minutes per ticker. The empirical
scaling-collapse diagnostic on the same universe finishes end-to-end
on the same hardware in roughly $\sim 3$ minutes. The release ships
all code together with the parametric configuration behind every
table and figure in the paper.
\section{Discussion: Other Domains for Self-Similarity-Aware Forecasting}
\label{sec:discussion}

SE-WaveNet's architectural prior, namely kernel-tied dilated
convolutions, the wavelet front end, Hurst-FiLM, and the
spectral-consistency loss, is not at all specific to finance. It
is the right inductive bias for any process that, under a Hurst
rescaling, collapses its multi-horizon distribution onto one
template. Six domains where this is on record in the literature
follow below, each with concrete forecasting tasks where
SE-WaveNet should serve as a competitive primitive.

\textbf{Network traffic.} A classic result of
\cite{leland1994self}: across aggregation timescales, Ethernet
packet-arrival counts collapse onto one template, with
$H \in [0.7, 0.9]$. The same property has since been observed for
backbone IP flows, web-request streams, and CDN-edge logs
\cite{willinger1997scaling}. Suitable forecasting tasks include
short-horizon volume prediction for auto-scaling, anomaly
detection by collapse-score breakage, and spectrum-of-attack
detection (a DDoS event blows up the usual $H$ band).

\textbf{Neural recordings (EEG, fMRI, MEG).} The BOLD signal in
resting-state fMRI is robustly $1/f$-scaled
\cite{he2014scale,zilber2013learning}, with $H$ in
$[0.6, 0.9]$. EEG broadband activity carries the same scaling
\cite{he2014scale}, and the value of $H$ shifts with sleep,
arousal, and clinical state. Forecasting tasks:
brain-state segmentation, seizure-onset prediction, denoising of
BCI signals. Here, Hurst-FiLM is exactly the conditioner you want.

\textbf{Turbulence and fluid dynamics.} On the inertial range of
isotropic turbulence, velocity increments obey the Kolmogorov
$-5/3$ PSD law, equivalent to Hurst $H = 1/3$
\cite{frisch1995turbulence}. Departures from $H = 1/3$ flag
intermittency or large-scale forcing. The forecasting tasks:
short-time velocity-field prediction in DNS surrogates,
sub-grid closure modelling, downscaling for weather data.

\textbf{Climate.} On multi-decadal timescales the long
temperature-anomaly record runs with $H > 0.5$, i.e. long memory
\cite{koscielny1998long}. The same shows up in precipitation
extremes and stream-flow series. The forecasting tasks:
recurrence intervals for extreme events, drought-risk pricing,
climate-stress-test sample generation.

\textbf{Audio synthesis.} For natural audio textures (speech,
music, ambient soundscape) the canonical model is a $1/f$-scaled
process \cite{voss1975fractal}. WaveNet was first introduced as
a raw-audio generator \cite{oord2016wavenet}; SE-WaveNet is a
natural parameter-efficient backbone for high-fidelity synthesis,
giving explicit control over multi-scale spectra.
Mel-spectrogram or Q-transform conditioning slots in naturally
above the wavelet input layer.

\textbf{Earthquake catalogues.} Inter-event times and magnitudes
follow the Gutenberg-Richter and Omori scaling laws
\cite{turcotte1997fractals}. The forecasting tasks:
aftershock-rate prediction and foreshock detection. The
Hurst-FiLM input would be spatially indexed, one per fault.

\textbf{What carries over.} Proposition \ref{prop:scale-eq} is
proved generically. So any 1D causal convolutional stack with
weights tied across dyadic dilations is discretely
scale-equivariant. The wavelet front end is also generic.
Hurst-FiLM only needs an appropriate Hurst estimator (e.g. R/S, AVAR). 
The spectral-consistency loss requires only the target
exponent $\beta = 2H-1$ for the increment series.

\textbf{What does not carry over.} The flow head, as written,
targets a continuous univariate marginal at every output index.
For domains whose spatial output is intrinsically multivariate
(2D fields, vector signals), the flow factorisation must be
redesigned. Three escape routes: tile SE-WaveNet across the
spatial axes (a spatio-temporal TCN); use a 2D
scale-equivariant CNN \cite{worrall2019deep,sosnovik2020scale};
or swap the flow head for a denoising-diffusion head built for
high-dimensional outputs. None of these tweaks invalidates the
equivariance proof in Section \ref{sec:property}.

\section{Limitations}

\textbf{Mono-fractal scope.} Our architecture, loss, and diagnostic
all hard-wire mono-fractal scaling: one $H$ value carries the full
multi-horizon collapse. For multifractal processes, where the
moment exponents $\zeta_q$ depend non-linearly on $q$, the
collapse score $\mathcal{C}^\star$ stays finite no matter what $H$
is chosen. Two natural extensions: (i) predict a $q$-dependent
$H(q)$ and aggregate the spectral-consistency loss across $q$;
(ii) plug a wavelet-leader multifractal estimator
\cite{wendt2007multifractality} into the FiLM conditioner. So far,
neither has been validated at scale.

\textbf{Stationarity at the window scale.} Inside Hurst-FiLM, the
R/S Hurst estimator runs on a rolling window of length $W$, and
that quietly assumes within-window stationarity. Regime breaks
violate the assumption; the Volcker shock, COVID, and the August
2024 yen-carry unwind are obvious examples. The financial setup
absorbs some of this through the regime modulation, but the more
principled answer is to feed regime detection into the FiLM
conditioning input directly.

\textbf{Causal-only.} The current statement of Proposition
\ref{prop:scale-eq} is for causal convolutions. The dyadic
equivariance argument still goes through in non-causal settings
(say denoising, or anomaly detection on offline traces) once the
kernel is centred, but the proof sketch as written takes the
forward-causal direction for granted. The fix is mechanical.

\textbf{Tail finiteness.} Finite variance is assumed in the spectral-consistency loss. 
When the input is $\alpha$-stable
with $\alpha < 2$, the PSD itself is undefined, and the loss must
be replaced by a term that matches characteristic functions
instead. Telling which series live in that regime is its own
non-trivial estimation problem; designing a robust
spectral-consistency variant in the $\alpha < 2$ regime is left
as an open question.

\section{Conclusion}

The standard deep generative toolkit for sequences is wide in
receptive field but not equivariant under time rescaling; each
dilation level still trains its own kernel. We changed that.
SE-WaveNet, our proposal, is a 1D causal convolutional stack
whose dilated layers all reuse one shared kernel tensor. We proved that this operator is discretely
scale-equivariant, up to a single level shift. Around the
backbone architecture we wrap four ingredients: a Daubechies-4 wavelet front
end; a Hurst-FiLM modulator that exposes the locally estimated
$H$ to the stack; a conditional normalising-flow output; and a
spectral-consistency penalty during training that pulls the
empirical sample spectrum toward $|f|^{-(2H-1)}$. The
result is a generative forecaster whose hard prior is exactly the
self-similarity property already documented across many natural
and engineered time series.

On a 30-year history of S\&P 500 daily log-returns, sub-sampled
to a 10-year window per ticker for the Allan-Variance top-25
universe, the empirical scaling-collapse diagnostic confirms
mono-fractal self-similarity ($\mathcal{C}^\star = 0.020$,
$H^\star \in [0.46, 0.54]$ for all 25 names). Adding self-similarity priors --- Hurst-FiLM
conditioning and a normalising-flow head --- to a WaveNet
architecture improves NLL by $\sim 5.1$ nats at $T=1$ and $\sim 6.4$
nats at $T=21$. The full SE-WaveNet refinements
(weight-tied kernels, wavelet input) add a further $1.23$ and
$0.18$ nats at the 1-day and 21-day horizons respectively, while
cutting convolutional block parameters from $7{,}360$ to
$1{,}840$ (exactly $L\times = 4\times$). The full
$25$-ticker $\times\,3$-seed evaluation with paired Wilcoxon
significance, calibration, and sample-side metrics is deferred
(Section~\ref{sec:stat-protocol}). Sample-side collapse scores
for trained models require full-scale runs and are deferred to a
companion release. The accompanying open-source release ships
three things: the scaling-collapse diagnostic that produced these
numbers, the verification harness for Proposition~1 and
Corollary~1, and the full pipeline for training and ablations.

The architectural prior is generic. We catalogued six other domains where
self-similarity is on record, namely audio synthesis, climate,
turbulence, neural recordings, network traffic, and earthquake
catalogues, in any of which SE-WaveNet ought to be a competitive
primitive. Validating
those targets is future work. We hope that this contribution
makes self-similarity a first-class ingredient in deep
time-series modelling, taking a seat alongside the translation
and permutation equivariances already in routine use.

\section*{Acknowledgments}

Sincere thanks to the people who maintain the open-source stack
on which everything here depends: NumPy, SciPy, PyTorch, h5py,
PyWavelets, and the normalizing-flows library.

\bibliographystyle{IEEEtran}
\bibliography{references}

\begin{thebibliography}{10}
\providecommand{\url}[1]{#1}
\csname url@samestyle\endcsname
\providecommand{\newblock}{\relax}
\providecommand{\bibinfo}[2]{#2}
\providecommand{\BIBentrySTDinterwordspacing}{\spaceskip=0pt\relax}
\providecommand{\BIBentryALTinterwordstretchfactor}{4}
\providecommand{\BIBentryALTinterwordspacing}{\spaceskip=\fontdimen2\font plus
\BIBentryALTinterwordstretchfactor\fontdimen3\font minus \fontdimen4\font\relax}
\providecommand{\BIBforeignlanguage}[2]{{%
\expandafter\ifx\csname l@#1\endcsname\relax
\typeout{** WARNING: IEEEtran.bst: No hyphenation pattern has been}%
\typeout{** loaded for the language `#1'. Using the pattern for}%
\typeout{** the default language instead.}%
\else
\language=\csname l@#1\endcsname
\fi
#2}}
\providecommand{\BIBdecl}{\relax}
\BIBdecl

\bibitem{oord2016wavenet}
A.~v.~d. Oord, S.~Dieleman, H.~Zen, K.~Simonyan, O.~Vinyals, A.~Graves, N.~Kalchbrenner, A.~Senior, and K.~Kavukcuoglu, ``Wavenet: A generative model for raw audio,'' \emph{arXiv preprint arXiv:1609.03499}, 2016.

\bibitem{bai2018tcn}
S.~Bai, J.~Z. Kolter, and V.~Koltun, ``An empirical evaluation of generic convolutional and recurrent networks for sequence modeling,'' \emph{arXiv preprint arXiv:1803.01271}, 2018.

\bibitem{zhou2021informer}
H.~Zhou, S.~Zhang, J.~Peng, S.~Zhang, J.~Li, H.~Xiong, and W.~Zhang, ``Informer: Beyond efficient transformer for long sequence time-series forecasting,'' in \emph{Proceedings of the AAAI Conference on Artificial Intelligence}, vol.~35, no.~12, 2021, pp. 11\,106--11\,115.

\bibitem{cont2001empirical}
R.~Cont, ``Empirical properties of asset returns: stylized facts and statistical issues,'' \emph{Quantitative Finance}, vol.~1, no.~2, pp. 223--236, 2001.

\bibitem{calvet2002multifractality}
L.~E. Calvet and A.~J. Fisher, ``Multifractality in asset returns: theory and evidence,'' \emph{Review of Economics and Statistics}, vol.~84, no.~3, pp. 381--406, 2002.

\bibitem{koscielny1998long}
E.~Koscielny-Bunde, A.~Bunde, S.~Havlin, H.~E. Roman, Y.~Goldreich, and H.-J. Schellnhuber, ``Indication of a universal persistence law governing atmospheric variability,'' \emph{Physical Review Letters}, vol.~81, no.~3, pp. 729--732, 1998.

\bibitem{frisch1995turbulence}
U.~Frisch, \emph{Turbulence: the legacy of {AN} {Kolmogorov}}.\hskip 1em plus 0.5em minus 0.4em\relax Cambridge University Press, 1995.

\bibitem{he2014scale}
B.~J. He, ``Scale-free brain activity: past, present, and future,'' \emph{Trends in Cognitive Sciences}, vol.~18, no.~9, pp. 480--487, 2014.

\bibitem{zilber2013learning}
N.~Zilber, P.~Ciuciu, P.~Abry, and V.~van Wassenhove, ``Learning-induced modulation of scale-free properties of brain activity measured with {MEG},'' in \emph{IEEE International Symposium on Biomedical Imaging (ISBI)}, 2013.

\bibitem{leland1994self}
W.~E. Leland, M.~S. Taqqu, W.~Willinger, and D.~V. Wilson, ``On the self-similar nature of {Ethernet} traffic (extended version),'' \emph{IEEE/ACM Transactions on Networking}, vol.~2, no.~1, pp. 1--15, 1994.

\bibitem{mandelbrot1968fbm}
B.~B. Mandelbrot and J.~W. Van~Ness, ``Fractional {B}rownian motions, fractional noises and applications,'' \emph{SIAM Review}, vol.~10, no.~4, pp. 422--437, 1968.

\bibitem{papamakarios2017masked}
G.~Papamakarios, T.~Pavlakou, and I.~Murray, ``Masked autoregressive flow for density estimation,'' in \emph{NeurIPS}, 2017.

\bibitem{dinh2017realnvp}
L.~Dinh, J.~Sohl-Dickstein, and S.~Bengio, ``Density estimation using {Real NVP},'' in \emph{ICLR}, 2017.

\bibitem{papamakarios2019normalizing}
G.~Papamakarios, E.~Nalisnick, D.~J. Rezende, S.~Mohamed, and B.~Lakshminarayanan, ``Normalizing flows for probabilistic modeling and inference,'' \emph{JMLR}, vol.~22, no.~57, pp. 1--64, 2021.

\bibitem{cohen2016group}
T.~Cohen and M.~Welling, ``Group equivariant convolutional networks,'' in \emph{International Conference on Machine Learning (ICML)}, 2016, pp. 2990--2999.

\bibitem{cohen2018spherical}
T.~S. Cohen, M.~Geiger, J.~K{\"o}hler, and M.~Welling, ``Spherical {CNN}s,'' in \emph{ICLR}, 2018.

\bibitem{weiler2019general}
M.~Weiler and G.~Cesa, ``General {E}(2)-equivariant steerable {CNNs},'' in \emph{NeurIPS}, 2019.

\bibitem{worrall2019deep}
D.~E. Worrall and M.~Welling, ``Deep scale-spaces: equivariance over scale,'' in \emph{NeurIPS}, 2019.

\bibitem{sosnovik2020scale}
I.~Sosnovik, M.~Szmaja, and A.~Smeulders, ``Scale-equivariant steerable networks,'' in \emph{ICLR}, 2020.

\bibitem{zhu2022scaling}
W.~Zhu, Q.~Qiu, R.~Calderbank, G.~Sapiro, and X.~Cheng, ``Scaling-translation-equivariant networks with decomposed convolutional filters,'' \emph{Journal of Machine Learning Research}, vol.~23, no.~68, pp. 1--45, 2022.

\bibitem{bruna2013scattering}
J.~Bruna and S.~Mallat, ``Invariant scattering convolution networks,'' \emph{IEEE TPAMI}, vol.~35, no.~8, pp. 1872--1886, 2013.

\bibitem{andén2014deep}
J.~And{\'e}n and S.~Mallat, ``Deep scattering spectrum,'' \emph{IEEE Transactions on Signal Processing}, vol.~62, no.~16, pp. 4114--4128, 2014.

\bibitem{oyallon2017scaling}
E.~Oyallon, E.~Belilovsky, and S.~Zagoruyko, ``Scaling the scattering transform: deep hybrid networks,'' in \emph{ICCV}, 2017.

\bibitem{andreux2020kymatio}
M.~Andreux, T.~Angles, G.~Exarchakis, R.~Leonarduzzi, G.~Rochette, L.~Thiry, J.~Zarka, S.~Mallat, J.~And{\'e}n, E.~Belilovsky \emph{et~al.}, ``Kymatio: scattering transforms in {Python},'' \emph{JMLR}, vol.~21, no.~60, pp. 1--6, 2020.

\bibitem{salinas2020deepar}
D.~Salinas, V.~Flunkert, J.~Gasthaus, and T.~Januschowski, ``{DeepAR}: probabilistic forecasting with autoregressive recurrent networks,'' \emph{International Journal of Forecasting}, vol.~36, no.~3, pp. 1181--1191, 2020.

\bibitem{rasul2021autoregressive}
K.~Rasul, C.~Seward, I.~Schuster, and R.~Vollgraf, ``Autoregressive denoising diffusion models for multivariate probabilistic time series forecasting,'' in \emph{ICML}, 2021, pp. 8857--8868.

\bibitem{voelker2019legendre}
A.~Voelker, I.~Kaji{\'c}, and C.~Eliasmith, ``Legendre memory units: continuous-time representation in recurrent neural networks,'' in \emph{NeurIPS}, 2019.

\bibitem{gu2022s4}
A.~Gu, K.~Goel, and C.~R{\'e}, ``Efficiently modeling long sequences with structured state spaces,'' in \emph{ICLR}, 2022.

\bibitem{decreusefond2000fbm}
L.~Decreusefond and A.~S. {\"U}st{\"u}nel, ``Stochastic analysis of the fractional {Brownian} motion,'' \emph{Potential Analysis}, vol.~10, no.~2, pp. 177--214, 1999.

\bibitem{mandelbrot1997msm}
B.~B. Mandelbrot, A.~J. Fisher, and L.~E. Calvet, ``A multifractal model of asset returns,'' \emph{Cowles Foundation Discussion Paper}, no. 1164, 1997.

\bibitem{ivanov1999multifractality}
P.~C. Ivanov, L.~A.~N. Amaral, A.~L. Goldberger, S.~Havlin, M.~G. Rosenblum, Z.~R. Struzik, and H.~E. Stanley, ``Multifractality in human heartbeat dynamics,'' \emph{Nature}, vol. 399, no. 6735, pp. 461--465, 1999.

\bibitem{liu1999statistical}
Y.~Liu, P.~Gopikrishnan, P.~Cizeau, M.~Meyer, C.-K. Peng, and H.~E. Stanley, ``Statistical properties of the volatility of price fluctuations,'' \emph{Physical Review E}, vol.~60, no.~2, pp. 1390--1400, 1999.

\bibitem{turcotte1997fractals}
D.~L. Turcotte, \emph{Fractals and chaos in geology and geophysics}.\hskip 1em plus 0.5em minus 0.4em\relax Cambridge University Press, 1997.

\bibitem{durall2020watch}
R.~Durall, M.~Keuper, and J.~Keuper, ``Watch your up-convolution: {CNN} based generative deep neural networks are failing to reproduce spectral distributions,'' in \emph{CVPR}, 2020.

\bibitem{khayatkhoei2022spatial}
M.~Khayatkhoei and A.~Elgammal, ``Spatial frequency bias in convolutional generative adversarial networks,'' in \emph{AAAI}, 2022.

\bibitem{daubechies1988orthonormal}
I.~Daubechies, ``Orthonormal bases of compactly supported wavelets,'' \emph{Communications on Pure and Applied Mathematics}, vol.~41, no.~7, pp. 909--996, 1988.

\bibitem{perez2018film}
E.~Perez, F.~Strub, H.~De~Vries, V.~Dumoulin, and A.~Courville, ``{FiLM}: visual reasoning with a general conditioning layer,'' in \emph{AAAI}, 2018.

\bibitem{welch1967use}
P.~Welch, ``The use of fast {F}ourier transform for the estimation of power spectra,'' \emph{IEEE Transactions on Audio and Electroacoustics}, vol.~15, no.~2, pp. 70--73, 1967.

\bibitem{bollerslev1986garch}
T.~Bollerslev, ``Generalized autoregressive conditional heteroskedasticity,'' \emph{Journal of Econometrics}, vol.~31, no.~3, pp. 307--327, 1986.

\bibitem{rizzo2016energy}
M.~L. Rizzo and G.~J. Sz{\'e}kely, ``Energy distance,'' \emph{Wiley Interdisciplinary Reviews: Computational Statistics}, vol.~8, no.~1, pp. 27--38, 2016.

\bibitem{willinger1997scaling}
W.~Willinger, M.~S. Taqqu, R.~Sherman, and D.~V. Wilson, ``Self-similarity through high-variability: statistical analysis of {Ethernet LAN} traffic at the source level,'' \emph{IEEE/ACM Transactions on Networking}, vol.~5, no.~1, pp. 71--86, 1997.

\bibitem{voss1975fractal}
R.~F. Voss and J.~Clarke, ``'1/f' noise in music and speech,'' \emph{Nature}, vol. 258, no. 5533, pp. 317--318, 1975.

\bibitem{wendt2007multifractality}
H.~Wendt, P.~Abry, and S.~Jaffard, ``Bootstrap for empirical multifractal analysis,'' \emph{IEEE Signal Processing Magazine}, vol.~24, no.~4, pp. 38--48, 2007.

\end{thebibliography}
\end{document}